\newtheorem{lemma}{Lemma}
\newtheorem{theorem}{Theorem}
\begin{document}

\title{\LARGE \bf
Visual Monitoring for Multiple Points of Interest on a 2.5D Terrain using a UAV  with Limited Field-of-View Constraint}

\author{Parikshit Maini, P.B. Sujit, and Pratap Tokekar
\thanks{P. Maini and P.B. Sujit are with Indraprastha Institute of Information Technology, India. \texttt{\small \{parikshitm, sujit\}@iiitd.ac.in}}
\thanks{P. Tokekar is with the Department of Electrical \& Computer Engineering, Virginia Tech, USA. \texttt{\small tokekar@vt.edu}}%
}

\maketitle
\begin{abstract}
 Varying terrain conditions and limited field-of-view restricts the visibility of aerial robots while performing visual monitoring operations. In this paper, we study the multi-point monitoring problem on a 2.5D terrain using an unmanned aerial vehicle (UAV) with limited camera field-of-view. This problem is NP-Hard and hence we develop a two phase strategy to compute an approximate tour for the UAV. In the first phase, visibility regions on the flight plane are determined for each point of interest. In the second phase, a tour for the UAV to visit each visibility region is computed by casting the problem as 
 an instance of the Traveling Salesman Problem with Neighbourhoods (TSPN). We design a constant-factor approximation algorithm for the TSPN instance. Further,  
 we reduce the TSPN instance to an instance of the Generalized Traveling Salesman Problem (GTSP) and devise an ILP formulation to solve it. We present a comparative evaluation of solutions computed using a branch-and-cut implementation and an off-the-shelf GTSP tool -- GLNS, while varying the points of interest density, sampling resolution and camera field-of-view. We also show results from preliminary field experiments.

\end{abstract}
\section{Introduction}\label{sec:intro}

Visual surveillance and monitoring is an important application area for aerial robots. Crop management \cite{tokekar2016precision}, area coverage \cite{araujo2013coverage,mainiRefuelUAV}, terrain mapping \cite{obermeyer2012sampling}, structural inspection \cite{morgenthal2014quality}, and disaster management \cite{erdelj2017help,sujit2007cooperative} are some applications where aerial robots are widely used. When planning paths in such missions, it is important to take visibility obstructions into account. Landscape features such as mountains, gorges, buildings, and bridges limit the line-of-sight of the aerial robots. In addition, operative limitations such as camera field-of-view and maximum flight altitude corresponding to the image resolution and/or regulatory requirements also restrict visibility. It is imperative that such restrictions be accounted for when planning for monitoring missions.

In this paper, we address the visual monitoring problem on 2.5D terrains using a UAV while accounting for camera field-of-view and terrain imposed visibility restrictions. We present a two-phase strategy to compute a tour for an aerial robot to visually monitor a set of points located within a terrain. A naive strategy is to visit each point of interest (or a point directly above it). This strategy does not exploit the camera field-of-view and essentially assumes the most restrictive field-of-view only along the center of the camera. When considering the flight altitude, this can lead to solutions that are numerically far from the optimal by a factor of $R$, where $R$ is the radius of the camera footprint on the ground (assuming a circular camera footprint). In fact, in the special case that the field-of-view angle of the camera sensor tends to zero, the problem reduces to an instance of the Traveling Salesman Problem (TSP) that is known to be NP-hard~\cite{mitchell1999guillotine}. Since our problem is a generalization of TSP, it is at least as hard as TSP.


\section{Related Work}\label{sec:relWork}
Visual monitoring and surveillance using UAVs has been an active area of research over the past decade. Various lines of work have addressed aspects related to target monitoring with differential priorities \cite{priorityBasedRouting}, multi-robot surveillance \cite{tokekar2016precision,mainiRefuelUAV,maini2018persistent,maini2018visibility}, persistent monitoring \cite{maini2018persistent,smithPersistent2015,mainiRefuelUAV}, mission planning while addressing robot kinematics \cite{mainiASCC2017,xu2011optimal} and so on. There has been limited effort towards addressing the visual monitoring problem in the presence of visibility restrictions due to terrain features. Terrain visibility however is a classic problem in computational geometry \cite{regionIntervisibility} and graphics literature (\cite{visibilitySurvey2003} and references within). Terrain guarding \cite{regionIntervisibility}, and watchtower problems \cite{watchtower_pankaj} relate to computing a set of points that lie on the terrain and at an altitude, respectively, to ensure line-of-sight area coverage on terrains. The problem of line-of-sight coverage within a polygon by a watchman (or a robot), known as the Watchman Routing Problem (WRP), is also well studied in the literature (\cite{tan2001fast} and the references within). Variations of WRP, including the homogeneous \cite{Carlsson1999,tokekar2015persistent,histogram} and heterogeneous \cite{maini2018visibility,maini2018persistent} multiple-robot versions (both within restricted sub-domains) have also been studied extensively. Recently, Maini et. al. \cite{maini2018visibility,maini2018persistent} modeled the coverage problem for piece-wise linear features within terrains as a variant of the n-WRP. 

There is a lot of literature within the aerial robotics community on coverage path planning. Area decomposition based on camera footprint and/or obstacle-free space is a popular choice and admits a robust discretization of the area of interest
\cite{araujo2013coverage,hameed2016side,xu2011optimal,nam2016approach,choi3D}. Other techniques include seed-spreader algorithms \cite{xu2011optimal}, potential fields \cite{hameed2016side} and graph-based search algorithms \cite{nam2016approach}. However, most of the existing works on coverage path-planning assume a flat surface and do not account for altitude variance (and hence the visibility obstructions) of the ground surface. A closely related work is that of Choi et. al. \cite{choi3D}, who address a constant resolution coverage problem that takes into account camera viewing direction and altitude to maintain the image resolution. In this work, we address a multiple-point monitoring problem using an aerial robot while explicitly accounting for visibility restrictions due to the shape of the terrain and camera field of view. Main contributions of this paper are as follows:
\begin{itemize}
   \item Extraction of visibility regions and modeling the path-planning problem as an instance of TSPN
    \item Design of a constant-factor approximation algorithm to solve the class of TSPN instances encountered within the path planning problem
    \item Reduction of the TSPN instance to GTSP thus allowing the application of existing algorithmic tools for GTSP. A new ILP formulation and a branch-and-cut implementation to solve GTSP
    \item Validation of the developed methods in simulation and field experiments
\end{itemize}

The rest of the paper is organized as follows. The application scenario and a formal problem definition are developed in Section \ref{sec:appScenario}. In Section \ref{sec:visComputation}, we describe a method to compute visibility regions on the constant altitude flight plane for the points of interest. In Section \ref{sec:apxAlgo} we develop a constant-factor approximation algorithm for the class of TSPN instances encountered within the path planning problem. Section \ref{sec:routePlanning} outlines an ILP formulation and a branch-and-cut implementation to solve GTSP. Sections \ref{sec:simResults} and \ref{sec:expts} discuss evaluation results in simulation and field trials, respectively. Concluding remarks and future directions are identified in Section \ref{sec:concFut}.
\section{Problem Formulation}\label{sec:appScenario}


Consider an environment $\mathcal E$ and a set of points of interest, $\mathcal P=\{p_i: i\in [1,m]\}$, as shown in Figure \ref{fig:sampleEnv}. We represent the topographical surface within $\mathcal E$ using a polyhedral (or 2.5D) terrain (\cite{sack1999handbook}, pg 352) and model it as a  triangular irregular network (TIN). 
Let $h$ be the constant flight altitude for UAV operation (higher than the altitude of all terrain features). We assume that the UAV is equipped with a fixed downward-facing camera having a constant focal length and a circular field-of-view (FOV). The camera casts a conical field of view on the terrain, as shown in Figure \ref{fig:camerCone}. 

While our strategies are extensible to the case of varying flight altitude, as may be required for constant resolution flights, we assume a constant flight altitude for ease of exposition. We briefly discuss the extension to constant resolution (varying altitude) flight operations, later in the text (Section \ref{sec:apxAlgo}). The circular FOV is also a non-binding assumption and  can be easily extended to a non-circular FOV. The FOV may then be represented as a fixed view angle, $\delta$, in each direction. 

\begin{figure}
\centering
\begin{subfigure}{0.48\linewidth}
\centering
\includegraphics[scale = 0.4]{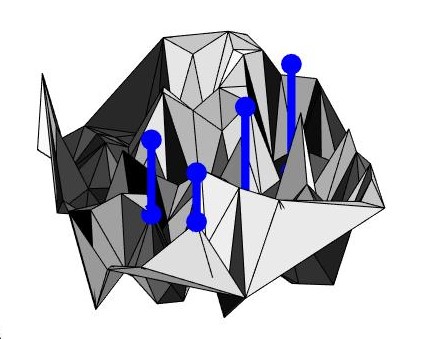}
\caption{}\label{fig:sampleEnv}
\end{subfigure}
\begin{subfigure}{0.48\linewidth}
\centering
\includegraphics[scale = 0.6]{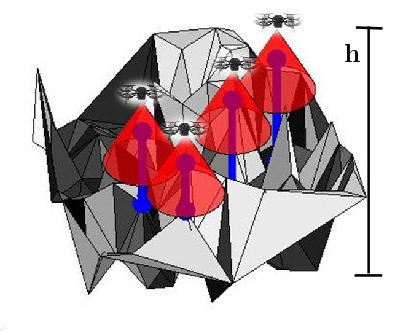}
\caption{}\label{fig:camerCone}
\end{subfigure}
\caption{(a) A sample 2.5 D terrain showing a set of points of interest marked with blue spikes. (b) Monitoring task using aerial robot camera sensor at altitude $h$. Camera FOV shown in red conical regions.}
\end{figure}

We introduce the following UAV routing problem on terrains (URPT), defined as: \emph{Given an environment with a polyhedral terrain, a set of points $\mathcal P$ located on the terrain, a UAV comprising of a fixed down-facing camera with a view angle $\delta$ operating at a fixed altitude $h$ above ground level, plan a minimum length tour for the UAV to  visually monitor all points in the set $\mathcal P$.}

We develop a two phase solution strategy for URPT. The first phase computes a visibility region for each point of interest, $p_i \in \mathcal P$. Visibility region of a terrain point is a closed connected space on the constant altitude flight plane, within the aerial robot's operational region that allows the robot to monitor the corresponding point and may assume a complex geometric shape due to obstructive features within the terrain (Figure \ref{fig:region}). We outline a method to compute the visibility region for each point of interest in Section \ref{sec:visComputation}. Second phase involves route planning for the aerial robot to visit each of the visibility regions to complete a monitoring mission. Sections \ref{sec:apxAlgo} and \ref{sec:routePlanning} develop route planning methods for the aerial robot.




\section{Visibility Computation} \label{sec:visComputation}

\begin{figure}
\centering
\begin{subfigure}{0.58\linewidth}
\centering
\includegraphics[width = \textwidth,height= 2.4cm]{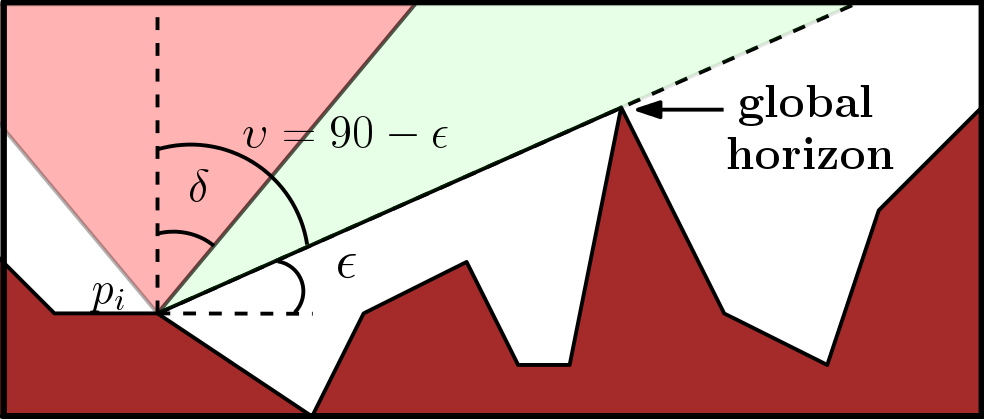}
\caption{}\label{fig:horizon}
\end{subfigure}
\begin{subfigure}{0.4\linewidth}
\centering
\includegraphics[trim = {0cm 2mm 0 0},scale = 0.2]{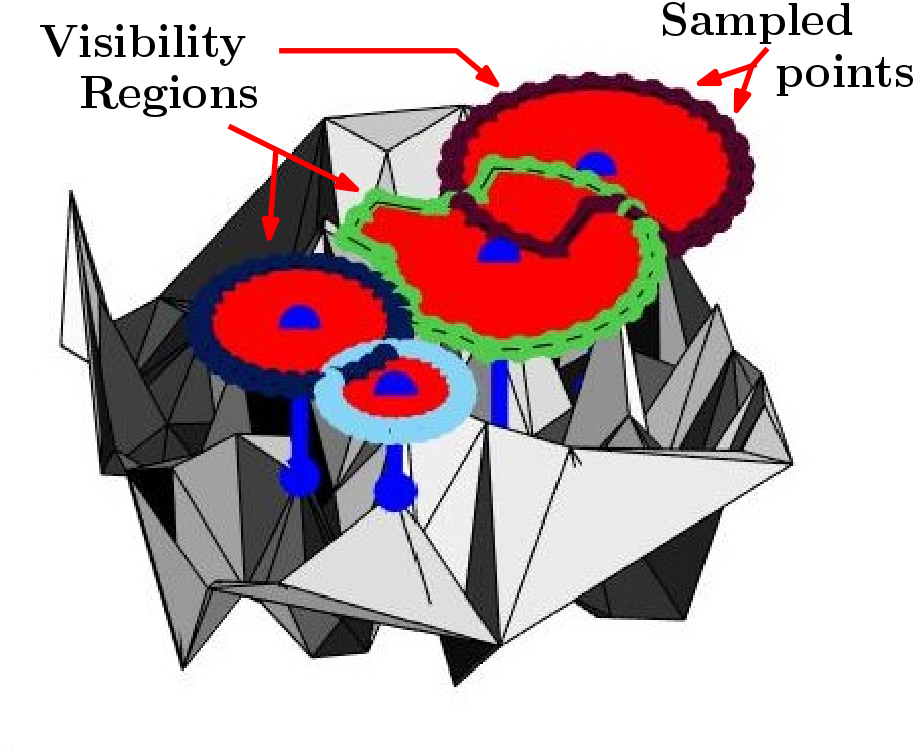}
\caption{}\label{fig:region}
\end{subfigure}
\caption{(a) Global horizon point is the farthest visible point in a radial direction. $\epsilon$ is the elevation of the horizon point, $\upsilon$ is the visibility angle and $\delta$ is the camera view angle. The minimum of $\upsilon$ and $\delta$ determines the boundary of the visibility region in a given radial direction. (b) Visibility regions (red closed shapes) on the constant altitude plane for a set of points on the terrain. Visibility region boundary for each point is marked in a different color for ease of distinguishing. The boundary for a visibility region is computed as the linear interpolation of the projections in $d$ radial directions.}\label{fig:visRegion}
\end{figure}

Consider a point, $p_i \in \mathcal{P}$, that needs to be monitored as shown in Figure \ref{fig:horizon}. To compute visibility region for $p_i$ on the terrain, we place a viewpoint at $p_i$ and compute terrain visibility. The farthest visible point in a given radial direction, that obstructs all points beyond itself when viewed from a specific viewpoint is called \emph{global horizon point} \cite{visibilitySurvey2003} and is expressed in terms of the elevation angle $\epsilon$. The locus of all global horizon points forms the global horizon. \emph{Visibility angle} $\upsilon$ in a radial direction is computed as the complement of the elevation angle of the global horizon point. The minimum of visibility angle and camera view angle defines the boundary of the visibility region in a radial direction.

Horizon computation is a fairly well studied problem in the graphics literature (\cite{visibilitySurvey2003} and references within). We employ the approximate horizon computation method developed by Stewart \cite{horizonComputation}. However, other methods in the literature may also be used since our solution approach is independent of the horizon computation algorithm used. 
 Our selection of Stewart's algorithm was motivated due to its ease of implementation and computational tractability. The interested reader may refer \cite{horizonComputation} for details on the algorithm. 
 
 We use Stewart's algorithm on the terrain shown in Figure \ref{fig:sampleEnv} to compute the horizon at each point of interest. The radial space is sampled in a discrete number of values, $d$\footnote{Sampling resolution, $d$, is a user-input parameter and relates to computational complexity of the approach. Its effects are discussed in more detail in Sections \ref{sec:routePlanning} and \ref{sec:results}.} and the minimum of $\upsilon$ and $\delta$ is computed in each direction. Linear interpolation of the extended projections in each direction on the constant altitude plane at height $h$ is then used to compute the boundary of the visibility region as shown in Figure \ref{fig:region}.  

\section{Polynomial-Time Approximation Algorithm for Route Planning}\label{sec:apxAlgo}

The route planning problem is a generalization of the NP-hard Traveling Salesperson Problem ~\cite{mitchell1999guillotine}. As a result, finding the optimal solution in polynomial time is impossible (unless $P=NP$). In this section, we present a polynomial time approximation algorithm for route planning. Specifically, we present a polynomial-time algorithm that finds a tour for the UAV whose length is guaranteed to be within a constant-factor of the minimum length.

The input to our algorithm is the set of visibility regions that are computed using the method described in Section~\ref{sec:visComputation}. The problem of finding the shortest tour that visits a set of 2D regions is known as the TSP with neighborhoods (TSPN). The neighborhoods correspond to the visibility regions, in our case. TSPN is NP-hard. However, there exists polynomial-time approximation algorithms for many special cases such as when the neighborhoods are all disks of the same radii~\cite{dumitrescu2003approximation} and non-overlapping convex polygons~\cite{mitchell2010constant}. These regions may not necessarily be polygonal (may contain circular arcs) or convex and can be overlapping. Nevertheless, we show how to approximate the visibility regions by possibly-overlapping disks of the same radius. We then show that this approximation still yields a tour whose length is bounded with respect to the optimal.

In the following, let $V=\{V_i: i\in [1,m]\}$ be the set of input visibility regions corresponding to the points of interest $\mathcal P$ that the robot must monitor.

\paragraph*{Lower Bound} We start by showing a lower bound on the length of the optimal (unknown) tour. Recall that $h$ is the height of the fixed-altitude plane on which the robot is allowed to fly and $\delta$ is the FOV angle. We construct a lower-bound approximation tour for the optimal one as follows. Replace each $V_i$ by a disk, say $D_i$, whose radius is equal to $h\tan\delta$. Let $D$ denote the collection of all the disks $D_i$. The disk, $D_i$ lies in the constant altitude flight plane at the height $h$ and centered at the same $x$ and $y$ coordinates as that of $p_i$. 

\begin{lemma}
The visibility region $V_i$ is completely contained within the disk $D_i$. 
\label{lem:contained}
\end{lemma}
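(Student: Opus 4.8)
The plan is to reduce the containment to a one-dimensional comparison along each radial direction emanating from the center $c$ of $D_i$, i.e. the point on the flight plane directly above $p_i$. Recall from Section~\ref{sec:visComputation} that, in a fixed radial direction, the boundary of $V_i$ is placed at the angle $\min(\upsilon,\delta)$, where $\upsilon$ is the visibility angle (the complement of the horizon elevation $\epsilon$) and $\delta$ is the camera view angle, both measured at the viewpoint $p_i$. First I would take the vertical cross-section containing $p_i$ and a candidate UAV position $q$ on the flight plane, and write the horizontal offset $r = \lVert q - c\rVert$ in terms of the angle $\theta$ that the line of sight from $p_i$ to $q$ makes with the vertical: since the flight plane lies at vertical separation $h$ above $p_i$, one has $r = h\tan\theta$.

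In this parametrization both constraints that define $V_i$ become radial bounds. The field-of-view requirement---that $p_i$ fall inside the downward camera cone of half-angle $\delta$---is exactly $\theta\le\delta$, i.e. $r\le h\tan\delta$; this is precisely the statement $q\in D_i$, so $D_i$ is nothing but the set of UAV positions satisfying the FOV constraint. (If one measures $h$ from the ground rather than from $p_i$, the effective separation is only smaller, so the FOV-feasible radius is at most $h\tan\delta$ and the bound is safe.) The line-of-sight requirement---that the sightline clear the global horizon---is $\theta\le\upsilon$, i.e. $r\le h\tan\upsilon$. Hence the boundary point of $V_i$ in this direction sits at $r = h\tan\!\bigl(\min(\upsilon,\delta)\bigr)$, and because $\min(\upsilon,\delta)\le\delta$ and $\tan$ is increasing on $[0,\pi/2)$, this radius is at most $h\tan\delta$. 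Thus every boundary point of $V_i$ lies in $D_i$.

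Finally I would promote this pointwise fact to the whole region. The boundary $\partial V_i$ is obtained by linearly interpolating the sampled radial boundary points, and a chord between two points of the disk $D_i$ stays in $D_i$ by convexity; hence $\partial V_i\subseteq D_i$, and since $D_i$ is convex it contains the region enclosed by $\partial V_i$, giving $V_i\subseteq D_i$. The only delicate point is the elementary cone/footprint geometry that identifies the FOV constraint with the disk of radius $h\tan\delta$ and confirms that the horizon constraint can only shrink the region in each direction; once the monotonicity of $\tan$ is recorded, containment follows immediately.
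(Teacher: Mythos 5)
Your argument is correct and rests on the same elementary geometry as the paper's own proof: the paper compares the (possibly obstructed) cone with apex at $p_i$ to an unobstructed cone with apex at $(x_i,y_i,0)$ whose intersection with the flight plane is exactly $D_i$, which is precisely your per-direction observation that the effective vertical separation is at most $h$ and the boundary angle $\min(\upsilon,\delta)$ is at most $\delta$, so every radial boundary point lies within distance $h\tan\delta$ of the center. Your extra step showing that the linearly interpolated boundary and the region it encloses remain in $D_i$ by convexity is a detail the paper leaves implicit, but it does not change the approach.
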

\begin{proof}
Recall that $V_i$ is obtained by projecting a reverse cone whose apex is at $p_i$ on the fixed altitude plane at height $h$.  Let the coordinates of $p_i$ be $(x_i,y_i,z_i)$. Consider a reverse cone drawn centered at $(x_i,y_i,0)$. Further assume that this cone is not obstructed by any point on the terrain. It is clear that this cone completely contains the cone drawn at $p_i$. The intersection of the larger cone with the fixed altitude plane at height $h$ yields the disk $D_i$. Therefore, $V_i$ is completely contained within $D_i$. (In the extreme case, $V_i$ is the same as $D_i$.)
\end{proof}

\begin{lemma}
Let $L_V^{*}$ be the length of the optimal tour that visits at least one point in each visibility region, $V_i\in V$. Let $L_D^{*}$ be the length of the optimal tour that visits at least one point in each disk, $D_i\in D$. We have: $L_D^{*} \leq L_V^{*}$.
\label{lem:lengthlowerbound}
\end{lemma}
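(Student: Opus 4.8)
The plan is to prove $L_D^{*} \leq L_V^{*}$ by a feasibility-inclusion (monotonicity) argument: enlarging each neighborhood from $V_i$ to the containing disk $D_i$ can only enlarge the set of feasible tours, so the optimal tour length can only decrease. The entire argument rests on the containment $V_i \subseteq D_i$ established in Lemma~\ref{lem:contained}, so no new geometry is needed.

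Concretely, I would start from the optimal tour for the visibility-region instance. Let $T_V$ be a tour achieving the minimum length $L_V^{*}$; by definition $T_V$ visits at least one point, call it $q_i$, in each visibility region $V_i$. The key step is the observation that, since $V_i \subseteq D_i$ for every $i$ by Lemma~\ref{lem:contained}, each witness point $q_i$ also lies in $D_i$. Hence $T_V$ already passes through at least one point of every disk $D_i$, which means $T_V$ is itself a \emph{feasible} tour for the disk instance.

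The conclusion then follows immediately from optimality: $L_D^{*}$ is the minimum tour length taken over \emph{all} tours that are feasible for the disk instance, and $T_V$ is one such feasible tour, of length $L_V^{*}$. Therefore the minimum can be no larger than $L_V^{*}$, giving $L_D^{*} \leq L_V^{*}$.

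I do not anticipate a genuine obstacle here; the statement is a direct monotonicity of the optimal TSPN length under neighborhood enlargement. The only point requiring care is the logical direction of the inclusion: because we \emph{enlarge} the regions, the feasible set of tours \emph{grows}, so the optimum over the larger feasible set is bounded above by the optimum over the smaller one. I would make sure to state explicitly that feasibility for $V$ implies feasibility for $D$ (via $V_i \subseteq D_i$), since reversing this implication would be false and would break the bound.
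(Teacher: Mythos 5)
Your proposal is correct and follows essentially the same argument as the paper: both use the containment $V_i \subseteq D_i$ from Lemma~\ref{lem:contained} to conclude that the optimal tour for the visibility regions is feasible for the disks, whence $L_D^{*} \leq L_V^{*}$ by optimality. Your explicit caution about the direction of the monotonicity is a nice touch but does not change the substance.
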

\begin{proof}
From Lemma~\ref{lem:contained}, we know that $V_i\subseteq D_i$. Therefore, any tour that visits at least one point in each $V_i$ is also a tour that visits at least one point in each $D_i$. As a result, the optimal tour (of length $L_V^{*}$) that visits at least one point in each $V_i$ is also a tour that visits at least one point in each $D_i$. However, $L_D^{*}$ is the length of the optimal tour that visits at least one point in each $D_i$. Therefore, $L_D^{*} \leq L_V^{*}$.
\end{proof}

Finally, we lower bound the length of the optimal tour that visits at least one point in each $V_i$. We relate the lower bound to the maximum number of non-overlapping disks. Specifically, let $D^I\subseteq D$ be the largest set of disks, $D_i$, such that no two disks overlap with each other. This can be found out greedily by constructing the maximum independent set of the disks, as shown in~\cite{dumitrescu2003approximation}. Let $m$ be the number of disks in $D^I$.
\begin{lemma}
Let $L_V^{*}$ be the length of the optimal tour that visits at least one point in each visibility region, $V_i$. We have: $L_V^{*} \geq \dfrac{m}{2}\alpha h\tan\delta$ where $\alpha=0.4786$ and when $m\geq 3$.
\end{lemma}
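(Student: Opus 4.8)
The plan is to reduce the claim to a purely geometric lower bound on the length of a closed curve that must meet many pairwise-disjoint disks, and then invoke a packing argument. First I would assemble the chain of inequalities already set up. By Lemma~\ref{lem:lengthlowerbound} we have $L_V^\ast \ge L_D^\ast$, so it suffices to bound $L_D^\ast$ from below. Since $D^I \subseteq D$, every tour that visits at least one point of each disk in $D$ is in particular a feasible tour for the sub-collection $D^I$; hence $L_D^\ast \ge L_{D^I}^\ast$, where $L_{D^I}^\ast$ is the length of the optimal tour visiting the $m$ pairwise non-overlapping disks of $D^I$. Thus everything comes down to showing that any closed tour meeting $m$ disjoint disks of radius $r = h\tan\delta$ has length at least $\tfrac{m}{2}\alpha r$.

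For the geometric core I would first record that, because the disks of $D^I$ are non-overlapping and each has radius $r$, their centers are pairwise separated by at least $2r$. The subtlety --- and the reason a naive bound fails --- is that a tour may merely graze a disk, so two consecutively visited disks can be tangent and contribute essentially nothing to the length; consequently one cannot lower bound $L_{D^I}^\ast$ by the travelling-salesman tour on the centers edge by edge. The device that circumvents this is to fatten the tour: the set of points within distance $r$ of the curve (a ``sausage'') has area at most $2rL + \pi r^2$, where $L$ is the tour length. Every disk is met by the tour, so every center lies in this sausage, and the radius-$r$ disks drawn about these $\ge 2r$-separated centers are disjoint and contained in the radius-$2r$ sausage. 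Comparing the total disk area $m\pi r^2$ against the sausage area, via a disk-packing/area bound, caps $m$ linearly in $L/r$, which is exactly a bound of the form $L \ge c\,m\,r$.

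I would then invoke the packing lower bound of Dumitrescu and Mitchell~\cite{dumitrescu2003approximation}, which carries out precisely this accounting and, after absorbing the lower-order boundary terms (legitimate once $m \ge 3$), yields the constant $\alpha = 0.4786$ and the estimate $L_{D^I}^\ast \ge \tfrac{m}{2}\alpha r$. Chaining this with $L_V^\ast \ge L_D^\ast \ge L_{D^I}^\ast$ and substituting $r = h\tan\delta$ gives the stated inequality $L_V^\ast \ge \tfrac{m}{2}\alpha h\tan\delta$.

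The main obstacle is the geometric lower bound itself rather than the reduction: because a tour can touch a disk in a single point, per-edge or inter-center arguments give nothing, and the packing argument must be made quantitative --- controlling the additive boundary terms and the non-asymptotic regime is exactly what forces the hypothesis $m \ge 3$ and a constant $\alpha$ strictly below the asymptotic packing optimum. If a self-contained bound were wanted instead of citing~\cite{dumitrescu2003approximation}, I would expect the cleanest route to be the sausage-area computation above, verifying that the resulting constant is at least $0.4786$ for all $m \ge 3$.
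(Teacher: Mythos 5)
Your proposal is correct and follows essentially the same route as the paper: the identical chain $L_V^{*} \geq L_D^{*} \geq L_{D^I}^{*}$ followed by the packing lower bound $L \geq \frac{m}{2}\alpha r$ for $m$ disjoint radius-$r$ disks with $\alpha = 0.4786$, which the paper imports as Theorem 1 of \cite{tekdas2012efficient} rather than directly from \cite{dumitrescu2003approximation}. The extra discussion of how the sausage/packing argument works is fine but not needed, since both you and the paper ultimately cite the bound rather than re-derive it.
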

\begin{proof}
From Theorem 1 in~\cite{tekdas2012efficient}, we know that any tour of length $L$ that visits at least one point in $m$ disjoint disks of radius $r$ satisfies,
\begin{equation}
L \geq \dfrac{m}{2} \alpha r.
\end{equation}
Therefore, the optimal tour of length $L_{D_I}^{*}$ that visits all the $m$ disks in $D^I$ must satisfy: 
\begin{equation}
L_{D_I}^{*} \geq \dfrac{m}{2} \alpha h\tan\delta.
\end{equation}
Since $D^I\subseteq D$, the optimal tour that visits at least one point in each disk in $D$ will have a length:
\begin{equation}
L_D^{*} \geq L_{D_I}^{*} \geq \dfrac{m}{2} \alpha h\tan\delta.
\end{equation}
From the above equation and Lemma~\ref{lem:lengthlowerbound}, we get the desired inequality: $L_V^{*} \geq \dfrac{m}{2} \alpha h\tan\delta$.
\end{proof}

\paragraph*{Upper Bound} So far, we have only presented a lower bound on the length of the shortest tour that visits each disk $D_i$. Note that visiting each disk in $D_i$ is necessary to visit $V_i$ but may not be sufficient. Instead, we will replace each $V_i$ by an inner disk, say $d_i$, such that it is completely contained within $V_i$, i.e., $d_i\subseteq V_i$. The inner disk $d_i$ is also centered at the same point as $D_i$ and all inner disks have the same radius.

Let $l$ be the maximum height of the terrain. That is, all points on the terrain are at height of $l$ or below. It is easy to see that $l<h$. We set the radius of the inner disks to be equal to $(h-l)\tan\delta$. Using a similar argument as given in Lemma~\ref{lem:contained}, we can prove that $d_i\subseteq V_i$. That is, all inner disks are completely contained within the visibility regions, $V_i$.

Our algorithm for solving the route planning problem is to find a tour that visits at least one point in each inner disk, $d_i$. This can be found using the algorithm presented by Dumitrescu and Mitchell~\cite{dumitrescu2003approximation}. (1) Find the maximum independent set of non-overlapping inner disks, say $d^I$. (2) Find a $(1+\epsilon)$--approximation to the optimal TSP tour that visits the center of all disks in $d^I$. (3) Follow the tour found in the second step. Every time the tour enters a new disk, take a detour to follow the circumference till you reach the same point again, and then move towards the center. Note that this step adds a detour of length at most $2\pi r$ to the TSP tour, where $r$ is the radius of the disk.

Let $L_d$ be the length of this tour. We now provide an upper bound to the length of this tour.
\begin{lemma}
Let $L_d$ be the length of the tour found using the proposed algorithm. We have: $$L_d \leq (1+\epsilon) (L_{V}^{*} + 2m_d(h-l)\tan\delta) + 2m_d\pi (h-l)\tan\delta),$$ where $m_d$ is the maximum number of non-overlapping inner disks, $d^I$.
\label{lem:lengthupperbound}
\end{lemma}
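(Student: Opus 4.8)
The plan is to follow the analysis of the Dumitrescu--Mitchell algorithm, accounting separately for the two sources of length in the constructed tour: the approximate traveling-salesman tour on the centers of the independent inner disks $d^I$, and the boundary detours added in step~(3). Writing $r=(h-l)\tan\delta$ for the common radius of the inner disks, I would first observe that step~(3) adds, for each of the $m_d$ disks in $d^I$, a single traversal of a circle of radius $r$, i.e.\ at most $2\pi r$ per disk; summed over the $m_d$ disks this contributes at most $2m_d\pi(h-l)\tan\delta$, which is exactly the last term of the claimed bound. It remains to bound the curve produced by steps~(1)--(2), namely a $(1+\epsilon)$-approximation to the optimal TSP tour through the $m_d$ centers; this is at most $(1+\epsilon)$ times the length $L_c^{*}$ of the \emph{optimal} tour through those centers.

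Combining the two estimates gives $L_d \le (1+\epsilon)L_c^{*} + 2m_d\pi r$, so the whole problem reduces to proving $L_c^{*} \le L_V^{*} + 2m_d r$. The natural way to obtain such an inequality is to transform a tour that already visits the regions into one that visits the centers: take the optimal tour of length $L_V^{*}$, and at each of the $m_d$ independent disks insert a spur from the point where this tour meets the corresponding region out to the center $c_i$ and back. Since the centers of $d_i$, $V_i$, and $D_i$ coincide and $c_i\in V_i$, each spur is a back-and-forth segment, and the resulting closed curve visits all $m_d$ centers. If the one-way spur length can be bounded by $r$, this yields $L_c^{*}\le L_V^{*}+2m_d r$, and substituting into the previous display produces $L_d \le (1+\epsilon)(L_V^{*}+2m_d(h-l)\tan\delta)+2m_d\pi(h-l)\tan\delta$, as required. (Validity of the tour---that it meets every region, not just those in $d^I$---follows from maximality of $d^I$ as already noted before the lemma, so here only the length accounting is needed.)

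The step I expect to be the main obstacle is precisely the spur estimate $L_c^{*}\le L_V^{*}+2m_d r$. The difficulty is that the optimal $V$-tour is only guaranteed to touch each $V_i$, and by Lemma~\ref{lem:contained} the region $V_i$ can be as large as the outer disk $D_i$ of radius $h\tan\delta$; a priori the point at which the tour meets $V_i$ may therefore be as far as $h\tan\delta$ from $c_i$, which would only give the weaker additive term $2m_d\,h\tan\delta$. To recover the stated radius $(h-l)\tan\delta$ I would instead charge against the optimal tour through the \emph{inner} disks indexed by $d^I$: converting that tour to a center tour costs at most $2r$ per disk, because the tour meets each $d_i$ within distance $r$ of its center, which cleanly yields $L_c^{*}\le L_{d^I}^{*}+2m_d r$. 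The remaining and most delicate point is then to relate $L_{d^I}^{*}$ to $L_V^{*}$ using the containment $d_i\subseteq V_i$ together with the fact that $d^I$ indexes only a subset of the regions; obtaining this comparison with the correct constant is where I would concentrate the effort, since a naive use of the containment gives the inequality in the unhelpful direction.
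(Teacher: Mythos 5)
Your decomposition of $L_d$ is the same as the paper's: the Step~(3) detours contribute at most $2m_d\pi(h-l)\tan\delta$, Step~(2) contributes at most $(1+\epsilon)$ times the optimal center tour $L_{TSP}^{*}$, and the whole lemma reduces to showing $L_{TSP}^{*}\le L_V^{*}+2m_d(h-l)\tan\delta$. You explicitly leave that inequality open, so as written the proposal does not establish the lemma; that is the gap. The paper closes it in two steps: $L_{TSP}^{*}\le L_{d}^{*}+2m_d(h-l)\tan\delta$ (a spur of one inner radius from wherever a $d^I$-visiting tour meets $d_i$ to its center $c_i$), followed by the bare assertion $L_{d}^{*}\le L_V^{*}$, where $L_{d}^{*}$ denotes the optimal tour meeting every inner disk in $d^I$.

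That said, the obstacle you flag is real, and the paper does not actually overcome it. The containment $d_i\subseteq V_i$ only shows that a tour meeting every $d_i$ also meets the corresponding $V_i$, which is the unhelpful direction; a tour meeting every $V_i$ may touch $V_i$ at a point outside $d_i$, as far as $h\tan\delta$ from $c_i$ (Lemma~\ref{lem:contained}), and for two far-apart targets whose visibility regions coincide with their outer disks one in fact gets $L_{d}^{*}>L_V^{*}$, so the paper's asserted inequality is not valid in general. The clean, provable statement is $L_{TSP}^{*}\le L_V^{*}+2m_d\,h\tan\delta$, obtained by running a spur from the $V$-tour's touch point of $V_i$ straight to $c_i$; this proves the lemma with $h\tan\delta$ replacing $(h-l)\tan\delta$ in the middle additive term, and the final theorem still goes through as a constant-factor guarantee with a correspondingly larger constant. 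In summary: your accounting of the two length contributions is correct and matches the paper, but the step you could not complete is genuinely missing from both your argument and the paper's, and you diagnosed exactly why the naive containment argument fails.
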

\begin{proof}
The length of the tour, $L_d$, is equal to the distance traveled to visit the centers of the disk in $d^I$ (Step 2) and the detours added every the center is visited (Step 3). Let $L_{TSP}^{*}$ be the length of the optimal TSP tour that visits the center of the disks in $d^I$. Although finding $L_{TSP}^{*}$ is NP-hard, there exists polynomial time approximation algorithms that find a tour whose length is at most $(1+\epsilon)L_{TSP}^{*}$ for any $\epsilon>0$. Therefore,
\begin{align}
L_d &\leq (1+\epsilon) (L_{TSP}^{*}) + 2m_d\pi (h-l)\tan\delta\\
&\leq (1+\epsilon) (L_{V}^{*} + 2m_d (h-l)\tan\delta) + 2m_d\pi (h-l)\tan\delta.
\end{align}
The second inequality follows from the fact that we can always construct a tour that visits the center of the disks in $d^I$ by first finding the optimal tour that visits at least one point in each disk in $d^I$ (of length $L_{d}^{*}$) and then adding a detour of at most $2r$ to visit the center. That is, $L_{TSP}^{*} \leq L_{d}^{*} + 2m_d$ and $L_d^{*} \leq L_V^{*}$.
\end{proof}

What remains to show is the relationship between $m$ and $m_d$. It is easy to see that $m_d \geq m$, that is the number of non-overlapping outer disks (in $D$) cannot be more than the number of non-overlapping inner disks ($d_i$). We will show that $m_d$ cannot be arbitrarily larger than $m$.

\begin{lemma}
Let $m$ be the maximum number of non-overlapping outer disks, $D_i$. Let $m_d$ be the maximum number of non-overlapping inner disks, $d_i$. We have:
\begin{equation}
m_d \leq \left(\frac{2h}{h-l}\right)^2 m.
\end{equation}
\end{lemma}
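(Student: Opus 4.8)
The plan is to exploit the fact that each inner disk $d_i$ and its corresponding outer disk $D_i$ are concentric and differ only in radius, and to combine this with a packing (area-comparison) argument. Write $R=h\tan\delta$ for the radius of every outer disk and $r=(h-l)\tan\delta$ for the radius of every inner disk, so that $R/r = h/(h-l)$ and the claimed factor is exactly $(2R/r)^2$. Two outer disks are non-overlapping precisely when their centers are at distance at least $2R$, and two inner disks precisely when their centers are at distance at least $2r$; since inner and outer disks share centers, both $m$ and $m_d$ are combinatorial quantities on the same underlying set of centers.

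First I would observe that the maximum independent set $D^I$ is in particular a \emph{maximal} independent set: if some outer disk $D_j\notin D^I$ overlapped no disk of $D^I$, we could add it and obtain a larger non-overlapping family, contradicting the maximality of $m$. Hence every center lies at distance less than $2R$ from the center of some disk in $D^I$; equivalently, the $m$ concentric disks of radius $2R$ centered at the $D^I$ points cover all the centers, including the centers of the inner disks in $d^I$.

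Next I would bound, for a single disk of $D^I$, how many inner disks of $d^I$ can have their centers in its radius-$2R$ neighborhood. Because the disks of $d^I$ are pairwise non-overlapping, the disks of radius $r$ about their centers are pairwise disjoint; those whose centers fall in one common radius-$2R$ neighborhood are all essentially contained in a disk of radius $2R$ about the corresponding $D^I$ point. Comparing total areas gives at most $\pi(2R)^2/(\pi r^2) = (2R/r)^2$ such inner disks per neighborhood. Assigning each center of $d^I$ to one neighborhood containing it and summing over the $m$ neighborhoods then yields $m_d \le (2R/r)^2\, m = \bigl(2h/(h-l)\bigr)^2 m$, as desired.

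The step I expect to be the main obstacle is the packing bound and its constant. The clean factor $(2R/r)^2$ arises from comparing the area of one inner disk against an enclosing region of radius $2R$; one must be careful that a disjoint inner disk whose center lies within $2R$ of a $D^I$ point actually protrudes out to radius $2R+r$, so a fully rigorous area comparison would give $(2R+r)^2/r^2$, with the stated bound obtained by using radius $2R$ for the enclosing region of the \emph{centers}. The other point needing care is the maximality-implies-domination step, since it is exactly what lets us cover the centers of $d^I$ by only $m$ neighborhoods rather than by all of them, and thus is what converts the local packing estimate into the global bound relating $m_d$ and $m$.
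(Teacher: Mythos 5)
Your proof is correct and follows essentially the same route as the paper's: charge each inner disk of $d^I$ to a nearby outer disk of $D^I$ via maximality of the independent set, then bound the number of disjoint inner disks per charge by the area ratio $(2R/r)^2 = \left(2h/(h-l)\right)^2$. Your caveat that the enclosing region really has radius $2R+r$ rather than $2R$ applies equally to the paper's own proof (which asserts containment in the doubled disk $D_i'$ without that adjustment), so you are, if anything, more explicit about both the maximality step and the constant than the paper is.
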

\begin{proof}
Consider an outer disk, $D_i$, whose radius is equal to $h\tan\delta$. Draw another disk, say $D_i^\prime$, whose radius is equal to $2h\tan\delta$ with the same center. Any inner disk that intersects with $D_i$ is completely contained within $D_i^\prime$. We now bound the maximum number of inner disks that can be packed within $D_i^\prime$ without any two overlapping. One inner disk has an area of $\pi (h-l)^2\tan^2\delta$. Therefore, at most $\left(\frac{2h}{h-l}\right)^2$ non-overlapping inner disks are contained within $D_i^\prime$. Since there are $m$ non-overlapping outer disks, we get the desired result.
\end{proof}

We are now ready to state the main result of this section.
\begin{theorem}
Let $L_d$ be the length of the tour found using the proposed algorithm. Let $L_V^{*}$ be the length of the optimal tour that visits at least one point in each visibility region, $V_i$. We have:
\begin{equation}
L_d \leq \left((1+\epsilon) \left( 1 + 16 \frac{h}{\alpha(h-l)} L_V^{*}\right) + 16\pi \frac{h}{\alpha(h-l)} \right) L_V^{*},
\end{equation}
where $h$ is the height of the fixed-altitude plane, $l$ is the height of the tallest point on the terrain, and $\alpha=0.4786$.
\end{theorem}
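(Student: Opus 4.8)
The plan is to chain the three preceding lemmas, starting from the upper bound on the tour length $L_d$ (Lemma~\ref{lem:lengthupperbound}) and successively eliminating the combinatorial quantities $m_d$ and $m$ in favour of the optimum $L_V^{*}$. The dependency pipeline is $L_d \rightarrow m_d \rightarrow m \rightarrow L_V^{*}$: the length upper bound is stated in terms of the number of non-overlapping inner disks $m_d$; the packing lemma relates $m_d$ to the number of non-overlapping outer disks $m$; and the lower bound lemma relates $m$ back to $L_V^{*}$. Each substitution is arranged so that the geometric factors $(h-l)$ and $\tan\delta$ cancel, leaving a clean multiple of $L_V^{*}$.

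First I would start from
\begin{equation}
L_d \leq (1+\epsilon)\bigl(L_V^{*} + 2m_d(h-l)\tan\delta\bigr) + 2m_d\pi(h-l)\tan\delta,
\end{equation}
and focus on the two terms carrying $m_d$. Substituting the packing bound $m_d \leq \tfrac{4h^2}{(h-l)^2}m$ turns each factor $2m_d(h-l)\tan\delta$ into $\tfrac{8h^2}{h-l}\,m\tan\delta$; here one power of $(h-l)$ from the packing bound cancels the $(h-l)$ multiplying $m_d$. I would then regroup this as $\tfrac{8h}{h-l}\cdot(mh\tan\delta)$ so that the lower bound can be applied in the next step.

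Next I would invoke the lower bound in the rearranged form $mh\tan\delta \leq \tfrac{2}{\alpha}L_V^{*}$ to replace $mh\tan\delta$ everywhere it appears. This converts $\tfrac{8h}{h-l}(mh\tan\delta)$ into $\tfrac{16h}{\alpha(h-l)}L_V^{*}$ and, treating the detour term identically up to the extra factor of $\pi$, converts $2m_d\pi(h-l)\tan\delta$ into $\tfrac{16\pi h}{\alpha(h-l)}L_V^{*}$. Substituting both back into the upper bound and collecting the common $L_V^{*}$ factor yields the stated inequality.

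The argument is essentially careful bookkeeping rather than a conceptual hurdle, since the three input lemmas are already established. The main thing to watch is that the cancellations of $(h-l)$ and $\tan\delta$ are tracked consistently across both the linear term and the $\pi$-weighted detour term, as a slip in either would alter the final coefficients $16$ and $16\pi$. I would also carry over the standing hypothesis $m\geq 3$ inherited from the lower bound lemma, under which the underlying disk-packing tour bound is valid.
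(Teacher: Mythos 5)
Your proposal is correct and follows essentially the same chain of substitutions as the paper's proof: start from Lemma~\ref{lem:lengthupperbound}, apply the packing bound $m_d \le \left(\tfrac{2h}{h-l}\right)^2 m$, then the rearranged lower bound $m\,h\tan\delta \le \tfrac{2}{\alpha}L_V^{*}$, with the $(h-l)$ and $\tan\delta$ factors cancelling to give the coefficients $16$ and $16\pi$. (Your final expression in fact matches the last line of the paper's derivation, which lacks the stray $L_V^{*}$ inside the inner parenthesis of the theorem statement --- that appears to be a typo in the statement rather than an error in your argument.)
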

\begin{proof}
We know from Lemma~\ref{lem:lengthupperbound},
\begin{align*}
L_d &\leq (1+\epsilon) \left(L_{V}^{*} + 2m_d(h-l)\tan\delta\right) + 2m_d\pi (h-l)\tan\delta,\\
&\leq (1+\epsilon) \left(L_{V}^{*} + 2\left(\frac{2h}{h-l}\right)^2m (h-l) \tan\delta)\right)\\ 
&+ 2 \left(\frac{2h}{h-l}\right)^2m \pi (h-l)\tan\delta,\\
&\leq (1+\epsilon) \left(L_{V}^{*} + 2\left(\frac{2h}{h-l}\right)^2 \frac{2}{\alpha h \tan\delta} L_V^{*} (h-l) \tan\delta)\right)\\ 
&+ 2 \left(\frac{2h}{h-l}\right)^2\frac{2}{\alpha h \tan\delta} L_V^{*}  \pi (h-l)\tan\delta,\\
&\leq \left((1+\epsilon) \left( 1 + 16 \frac{h}{\alpha(h-l)}\right) + 16\pi \frac{h}{\alpha(h-l)} \right) L_V^{*}\\
&\leq O(1) L_V^{*}.
\end{align*}
\end{proof}

This shows that the proposed algorithm yields a constant-factor approximation. The constant depends on two parameters, maximum height of the terrain and the height of the fixed-altitude plane, but is otherwise independent of the input (e.g., $|\mathcal{P}|$, the width of the terrain, etc.). We would like to remark that, the same approximation algorithm may also be used in the case of constant resolution imagery (variable flight altitude) missions to compute UAV tours within a constant-factor of the optimal. It is easy to see that the enclosing outer disks ($D_i$) and enclosed inner disks ($d_i$), used to compute the lower and upper bounds on the UAV tour respectively, are still valid and may be used to compute the same constant approximation factor.
\section{Route Planning}\label{sec:routePlanning}
The visibility regions for all points of interest computed as discussed in Section \ref{sec:visComputation} are given as input to the route planning stage. Unless all visibility regions are contained within one of the visibility region, in which case the containing region may as well be ignored and the problem be solved for $m-1$ visibility regions, the tour for the aerial robot must enter each visibility region at a \emph{point on the boundary} of the region. This implies that we can restrict the search for the points visited by the aerial robot to the boundaries of the respective visibility regions. Similar ideas have been used by Obermeyer et al. \cite{obermeyer2012sampling} for path planning for a non-holonomic robot through a set of polygonal spaces. Hence, we consider only the points on the boundary of each visibility region to compute a tour for the aerial robot. Each visibility region contributes $d$ (sampling parameter, refer Section \ref{sec:visComputation}) unique points on its boundary. To address the overlapping regions case, we duplicate all points in the overlapping regions and add them to each region in the intersection.

To formalize, 
let $S_i$ be the set of sample points on the boundary of the $i^{th}$ visibility region corresponding to the $p_i$ point of interest and $\mathcal S = \{S_i: i \in [1,m]\}$ be the set of all such sets. Let $\mathcal V = \bigcup\limits _{i=1}^{m} S_{i}$ be the set of all vertices on boundary of a visibility region. $\mathcal V$ also includes duplicate points that lie in the intersection of visibility regions. We define the cost function, $c_{ij}:\mathcal V\times \mathcal V \rightarrow \mathbb R^+$, as the length of path for the aerial robot to go from $v_i$ to $v_j$, where $v_i, v_j \in \mathcal V$. In this form, the problem reduces to an instance of the well-known Generalized Traveling Salesman Problem (GTSP). We employ two strategies to solve the GTSP instance. An Integer Linear Programming (ILP) formulation solved within a branch-and-cut framework and a specialized GTSP solver called GLNS \cite{Smith2016GLNS}. We give the ILP formulation below  and describe the GLNS solver settings in the next Section.

\subsection*{ILP Formulation}
To formulate the problem as an Integer Linear Program, we define binary decision variable, $y_{ij}$, for each pair of vertices $v_i$ and $v_j$ in the set $\mathcal V$. $y_{ij} = 1$ if the aerial robot visits $v_i$ and $v_j$ vertices in order. Let $\delta^+(X)$ denote the set of pairs $(i,j)$ such that $v_i \in X$ and $v_j \in \mathcal V\setminus{X}$ and $\mathbb P(X)$ denote the power set of $X$. The objective function and constraints of the ILP formulation are defined as 

\noindent \emph{Objective:}
\begin{flalign}
&~ \min \sum \limits_{v_i \in \mathcal V} \sum \limits_{v_j \in \mathcal V} c_{ij} y_{ij} & \label{eq:obj} 
\end{flalign}

\noindent \emph{Degree Constraints:}
\begin{flalign}
& \sum_{v_j \in \mathcal V \setminus{v_i}} y_{ji} - \sum_{v_j\in \mathcal V \setminus{v_i}} y_{ij} = 0, \quad \forall v_i \in \mathcal V \label{constr:tourConst} &\\
& \sum_{v_i \in S_k} \sum_{v_j \in \mathcal V \setminus S_k} y_{ji} = 1, \quad \forall k \in [1\ldots m] \label{constr:visitConst} &
\end{flalign}

\noindent \emph{Sub-tour Elimination Constraints:}
\begin{flalign}
&\sum_{(i,j) \in \delta^+(s)} y_{ij} = 1, \quad\forall s \in \mathbb P(\mathcal S)\setminus \{\mathcal S,\phi\} &\label{constr:subTour}
\end{flalign}

\noindent \emph{Variable Domain:}
\begin{flalign}
& y_{ij} \in \{0,1\} \quad \forall v_i, v_j \in \mathcal V \label{constr:yDomain} &
\end{flalign}

Equations \eqref{constr:tourConst} and \eqref{constr:visitConst} represent tour constraints and ensure each visibility region is visited. Equation \eqref{constr:subTour} represents the set of sub-tour elimination constraints. The number of sub-tour elimination constraints grows exponentially with increase in the number of visibility regions. Therefore, we employ a branch-and-cut strategy to solve the ILP formulation. A relaxed formulation, minus the sub-tour elimination constraints is given as input to the solver. A separation algorithm (Algorithm \ref{algo:sepAlgo}) computes valid inequalities (given by Equation \eqref{constr:subTourSepAlgo}) at runtime and adds them to the formulation to ensure feasibility of the final solution. 
Branch-and-cut has been observed to be an effective strategy to improve computational time in problems of similar flavor \cite{grotschel1985}.
\begin{flalign}
&\sum_{(i,j) \in \delta^+(\kappa)} y_{ij} = 1. &\label{constr:subTourSepAlgo}
\end{flalign}


 \begin{algorithm}[t]
 \caption{Separation Algorithm}\label{algo:sepAlgo}
 \begin{algorithmic}
\State Build graph $G$(directed) $\equiv (\mathcal P,E)$
\State Add edge $(i,j)$ to E, if $\exists~ v_k \in S_i, v_l\in S_j ~{\rm and }~ y_{kl} = 1$
\State Find connected components $\mathcal G$ in $G$
\If{$(|\mathcal G|>1)$}
\ForAll{connected components $\kappa \in \mathcal G$}
\State Add valid inequality (Eq. \eqref{constr:subTourSepAlgo})
\EndFor
\EndIf
 \end{algorithmic}
 \end{algorithm}


\section{Simulation Results}\label{sec:simResults}

\begin{figure}
\centering
\begin{subfigure}{0.48\linewidth}
\centering
\includegraphics[scale=0.4]{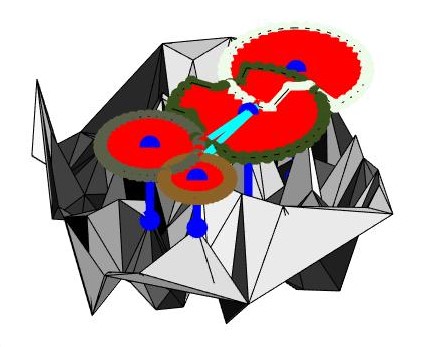}
\caption{}
\end{subfigure}
\begin{subfigure}{0.48\linewidth}
\centering
\includegraphics[scale=0.4]{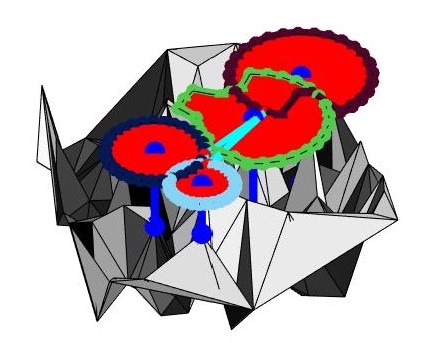}
\caption{}
\end{subfigure}
\caption{Aerial robot tours generated by (a) GLNS solver and (b) ILP solver. The tour is shown in cyan color. In both the cases, the tour is the same for this particular instance.}\label{fig:samplePathSim}
\end{figure}


The performance of the two stage strategy is evaluated using IBM ILOG CPLEX library (version 12.7) in C++11 and GLNS solver in Julia \cite{Smith2016GLNS}. For visualization, we use MATLAB R2017a with TIN based modeling of the terrain.

\begin{table*}
\centering
\begin{tabular}{lccccccccc}
\multicolumn{10}{c}{Number of instances solved by the ILP solver.}\\
\toprule
\multirow{2}{*}{$m$} &\multicolumn{3}{c}{$\delta = 20\degree$} &\multicolumn{3}{c}{$\delta = 30\degree$} & \multicolumn{3}{c}{$\delta = 40\degree$}\\
&d = 20  &d = 30  &d = 40&d = 20  &d = 30  &d = 40&d = 20  &d = 30  &d = 40 \\
\midrule
4 & 20 (20)  & 20 (20) & 9 (20) & 20 (20)  & 19 (20) & 7 (20) & 20 (20)  & 18 (20) & 13 (20)\\
6 & 14 (20)  & 0 (20)  & 0 (20) & 10 (20)  & 0 (20)  & 0 (19) & 7 (20)  & 0 (20)  & 0 (19)\\
8 & 0 (20)   & 0 (20)  & 0 (18) & 0 (19)   & 0 (17)  & 0 (8) & 0 (18)   & 0 (13)  & 0 (8)\\
\bottomrule
\end{tabular}%
\caption{The table shows the number of instances solved optimally by the ILP solver. Numbers in bracket represent the number of instances for which the solver could compute at least a feasible solution.} \label{tab:ILPnum}
\end{table*}

\begin{table*}
\centering
\begin{tabular}{lccccccccc}
\multicolumn{10}{c}{Mean percentage relative gap of GLNS solver solution w.r.t. ILP generated lower bounds.}\\
\toprule
\multirow{2}{*}{$m$} &\multicolumn{3}{c}{$\delta = 20\degree$} &\multicolumn{3}{c}{$\delta = 30\degree$} & \multicolumn{3}{c}{$\delta = 40\degree$}\\
&d = 20  &d = 30  &d = 40&d = 20  &d = 30  &d = 40&d = 20  &d = 30  &d = 40 \\
\midrule
4 & 0 (0)  & 0 (0) & 4.8 (5.6) & 0 (0)  & 1 (4.5) & 12.4 (16.7) & 0 (0)  & 2.1 (7.4) & 14 (24.7)\\
6 & 4.5 (9.5)  & 24.7 (11.2)  & 30.1 (12.2)  & 17.8 (23.1)  & 44.8 (16.8)  & 52.3 (18.2)  & 28.4 (27.9)  & 57.3 (18)  & 68 (19.8)\\
8 & 31.6 (15.2)   & 41.7 (17.8)  & 48.2 (23.2)  & 53.1 (20.8)   & 68.2 (20)  & 85.1 (20.2) & 76.2 (19.7)   & 85 (16.1)  & 94 (10.2)\\
\bottomrule
\end{tabular}%
\caption{Table shows the mean percentage relative gap of GLNS solutions w.r.t. ILP solver generated lower bounds. The numbers in bracket represent the standard deviation.} \label{tab:GLNSgap}
\end{table*}


\begin{table*}
\centering
\begin{tabular}{lccccccccc}
\multicolumn{10}{c}{Mean percentage relative gap of ILP generated solutions.}\\
\toprule
\multirow{2}{*}{$m$} &\multicolumn{3}{c}{$\delta = 20\degree$} &\multicolumn{3}{c}{$\delta = 30\degree$} & \multicolumn{3}{c}{$\delta = 40\degree$}\\
&d = 20  &d = 30  &d = 40&d = 20  &d = 30  &d = 40&d = 20  &d = 30  &d = 40 \\
\midrule
4 & 0 (0)  & 0 (0) & 5.3 (5.9) & 0 (0)  & 1 (4.7) & 13.0 (16.9) & 0 (0)  & 2.2 (7.7) & 14.2 (24.9)\\
6 & 4.7 (9.8)  & 25.9 (10.7)  & 32 (11.4)  & 18.2 (23.4)  & 45.3 (16.8)  & 51.3 (14.5)  & 28.9 (28.1)  & 58.5 (17.8)  & 68.5 (18.7)\\
8 & 33.7 (14.3)   & 43.7 (17.6)  & 45.3 (14.8)  & 52.3 (18.2)   & 65.1 (16.3)  & 65.7 (13) & 74.99 (18.4)   & 79.6 (14.3)  & 87.5 (9.3)\\
\bottomrule
\end{tabular}%
\caption{The table shows the relative gap for best ILP solutions  w.r.t. solver generated lower bound within a maximum time limit of 900 seconds. The numbers in bracket represent the standard deviation.} \label{tab:ILPgap}
\end{table*}

\subsection{Simulation setup}\label{sec:simSetup}

To generate the simulation instances we use an environment of size 200 units$\times$200 units. A 10$\times$10 grid was superimposed on the environment and terrain altitude at each grid point was sampled randomly between 0 to 100 units. A TIN representation of the terrain was then generated by a piecewise triangular interpolation between neighboring grid points. 20 different terrains were generated using this method. The aerial robot flight altitude was fixed at 125 units (clear of all terrain features). The size of the set $\mathcal P$ of points to be monitored on the terrain was varied from 4 to 8 in steps of 2. The camera FOV view angle $\delta$ was varied from 20$\degree$ to 40$\degree$ in steps of 10. The value of sampling resolution parameter $d$, that determines the number of points sampled on the boundary of the terrain, was varied from 20 to 40 in steps of 10. A total of 540 instances were generated.

The total number of points in the GTSP instances were in the range of 80 to 1500.
In general, the GTSP instance size increases with increase in number of points (sets) and the value of sampling parameter $d$.  It also increases with increase in camera view angle, $\delta$; as this increases the overlap between visibility regions. The GLNS solver was used in the \emph{fast} mode setting and allowed a maximum time of 100 seconds. The maximum time limit was not reached for any of the instances with GLNS. The ILP formulation was implemented in a branch-and-cut framework using the lazy callback functionality of IBM ILOG CPLEX library. The solver was allowed to run for a maximum time of 900 seconds for each instance.

\subsection{Results}\label{sec:results}
Simulation results using GLNS and ILP solvers for GTSP instances generated using visibility regions to compute tours for the aerial robot are summarized in Tables \ref{tab:GLNSgap}-\ref{tab:ILPgap}. GLNS solver found a feasible solution for each instance in under 5 seconds. Table \ref{tab:ILPnum} gives the number of instances solved by the ILP solver in 900 seconds. The ILP solver was not able to find any feasible solution for a sizable number of instances. This is attributed to the time limit of 900 seconds imposed on the ILP solver. We do not report results for larger instances ($m=10$, $d = 50$) for both ILP and GLNS, as the solver could not find the optimal solution for any instance with 8 sets ($m = 8$). The relative gap for both GLNS and ILP solver generated solutions rises quickly with increase in the value of $m$ and $\delta$ (Tables \ref{tab:GLNSgap} and \ref{tab:ILPgap}). This points to the hardness and large size of the instances, as GLNS is a widely used tool to solve GTSP instances. Sample paths generated using the two solution methods are shown in Figure \ref{fig:samplePathSim}.

\section{Field Experiments}\label{sec:expts}

\begin{figure}
\centering
\begin{subfigure}{0.47\linewidth}
\includegraphics[scale=0.18]{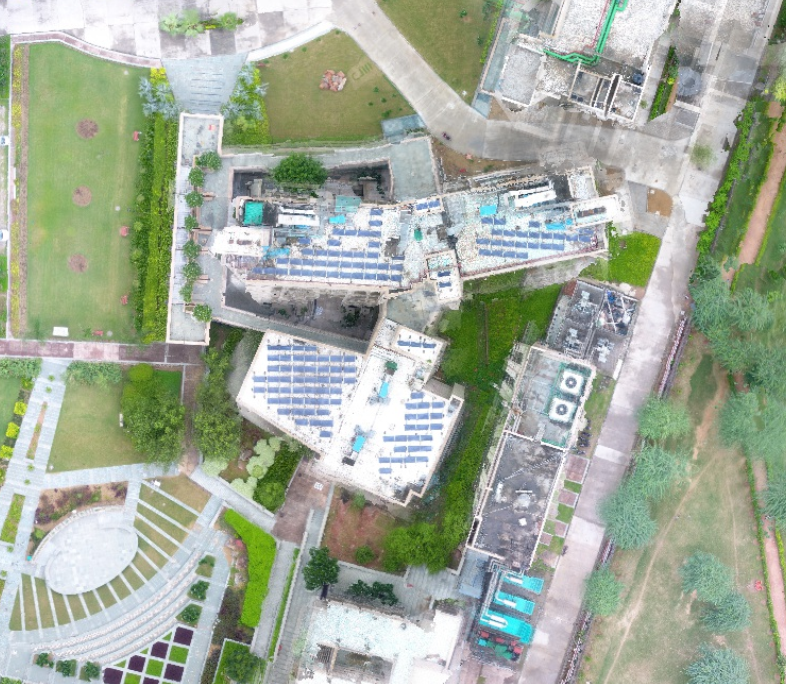}
\caption{}
\end{subfigure}
\begin{subfigure}{0.49\linewidth}
\includegraphics[scale=0.18]{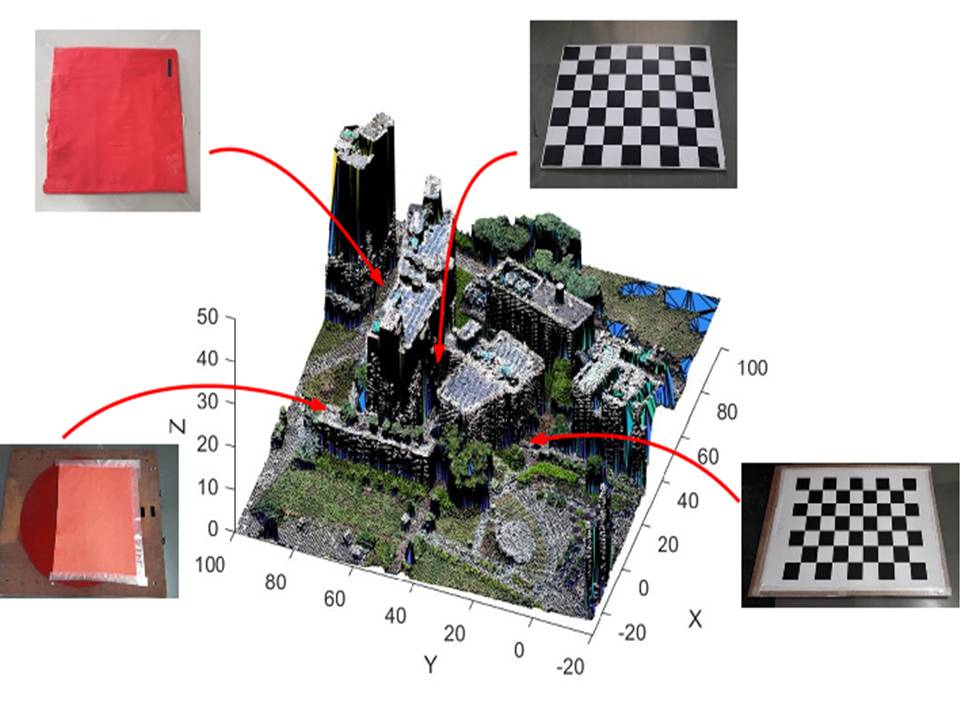}
\caption{}
\end{subfigure}
\caption{(a) Operational area for the field trials. (b) DEM of the topography showing the placement of targets.}
\label{fig:area_DEM}
\end{figure}

Field experiments were conducted at IIIT-Delhi campus to validate the proposed solution approach. The operational area for the experiments is shown in Figure \ref{fig:area_DEM}(a). A DJI Phantom 4 quadrotor was used to perform the experiments. A DEM of the area was created using PIX4D and modeled using a TIN representation. Four target points were placed in the area as shown in Figure \ref{fig:area_DEM}(b). Visibility regions for each target were computed using the visibility computation strategy given in Section \ref{sec:visComputation} (Figure \ref{fig:regions_path}(a)). The value of the sampling resolution parameter $d$ was set to 30. Paths for the UAV were computed as solution to the corresponding GTSP instance solved using GLNS solver, as shown in Figure \ref{fig:regions_path}(b). Experiment footage showing the UAV flight and camera imagery may be viewed in the video attachment to the paper.

\begin{figure}
\centering
\begin{subfigure}{0.48\linewidth}
\centering
\includegraphics[scale=0.195]{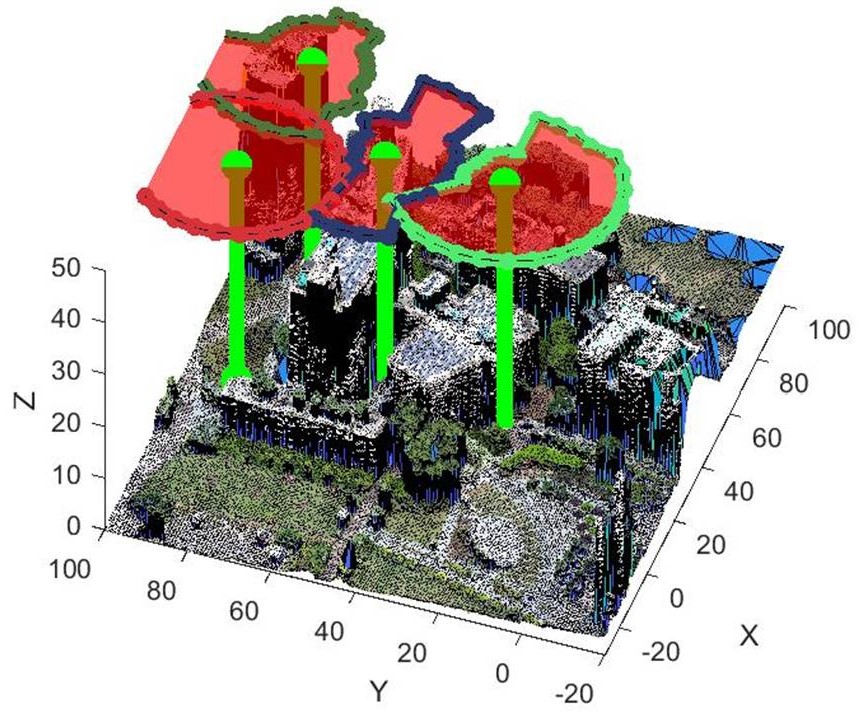}
\caption{}
\end{subfigure}
\begin{subfigure}{0.48\linewidth}
\centering
\includegraphics[scale=0.18]{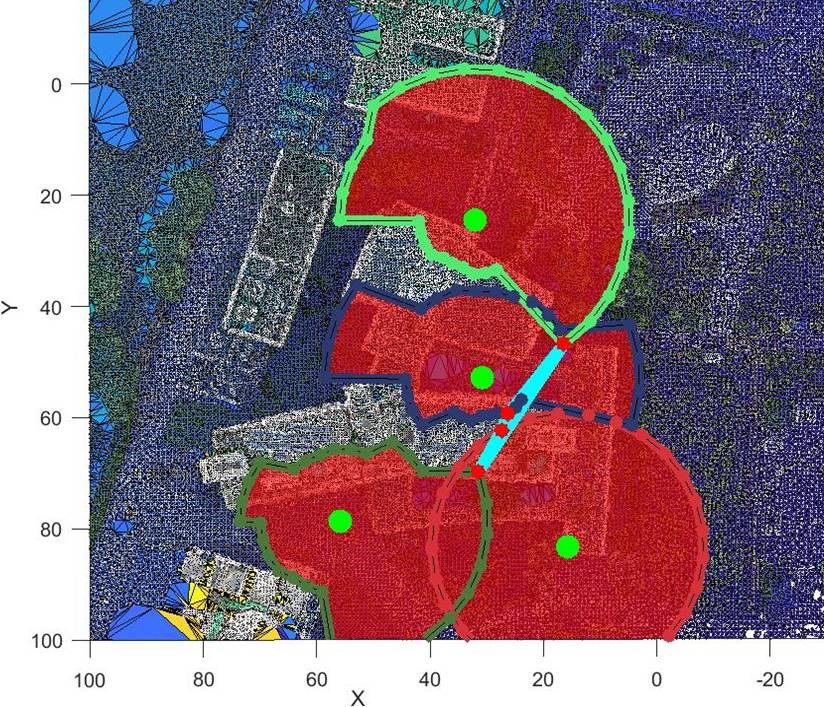}
\caption{}
\end{subfigure}
\caption{(a) Visibility regions for the monitoring targets computed using the strategy discussed in Section \ref{sec:visComputation}. (b) Top view of the operational area showing the UAV path in cyan color and visibility regions.}
\label{fig:regions_path}
\end{figure}
\section{Conclusion and Future Work}\label{sec:concFut}

In this paper, we proposed a two phase strategy to computer tours for a UAV to perform multi-point visual monitoring on a terrain. The path planning problem was modeled as an instance of the TSPN problem and a constant-factor polynomial time approximation algorithm was designed for the class of TSPN instances. Further, GTSP based solution methods by discretizing the visiblity region boundaries were evaluated using two solution techniques -- ILP formulation implemented in a branch-and-cut framework and GLNS (widely used GTSP solver). Field experiments were also conducted to verify the applicability and effectiveness of solution methods. 

A natural extension of the proposed framework is to increase the number of vehicles and optimize the use of vehicles for monitoring larger number of points of interest. Additional venues for future exploration can be (i) design of efficient heuristics to decrease the tour cost (ii) persistent monitoring problem wherein fuel limitations of the aerial robot need to be addressed  and (iii) exploring regions that have non-convex terrains. 


\end{document}